\PassOptionsToPackage{dvipsnames}{xcolor}
\documentclass{applemlr}

\usepackage[dvipsnames]{xcolor}
\hypersetup{pdftitle={Weighted State Souping for Efficient Long-Context Retrieval},
  pdfauthor={William L. Tong, Aryo Lotfi, Emmanuel Abbe, Josh Susskind, Cengiz Pehlevan, Eran Malach}}
\usepackage{graphicx}
\graphicspath{{./}}

\ifPDFTeX
  \usepackage[utf8]{inputenc} %
  \usepackage[T1]{fontenc}    %
\fi
\usepackage{url}            %
\usepackage{booktabs}       %
\usepackage{amsfonts}       %
\usepackage{nicefrac}       %
\usepackage{microtype}      %

\usepackage{amsmath}
\usepackage{amssymb}
\usepackage{mathtools}
\usepackage{amsthm}
\usepackage{bbm}

\newcommand{\R}{\mathbb{R}}
\newcommand{\vr}[1]{\mathbf{#1}}

\newtheorem{proposition}{Proposition}
\newtheorem{remark}[proposition]{Remark}

\title{Weighted State Souping \\for Efficient Long-Context Retrieval}

\author[2,*]{William L. Tong}
\author[1]{Aryo Lotfi}
\author[1]{Emmanuel Abbe}
\author[1]{Josh Susskind}
\author[2]{Cengiz Pehlevan}
\author[1]{Eran Malach}

\affiliation[1]{Apple}
\affiliation[2]{Harvard University}
\contribution[*]{Work done while at Apple}

\abstract{
State space models (SSMs) combine fast inference and fixed state size with competitive language-modeling performance. Recently, \textit{state souping} has been proposed as a procedure whereby an SSM independently encodes chunks of tokens like documents, tool definitions, or memory records. The resulting state encodings may be cached and recombined to cheaply construct a context for downstream queries. However, both state souping and vanilla SSMs deteriorate on long, crowded contexts, limiting their utility for long-horizon domains. To address this issue, we propose \textit{weighted state souping}, a simple method that learns query-dependent weights for composing cached states. The resulting models improve long-context performance substantially across both single- and multi-hop document question answering, needle-in-a-haystack retrieval, variable tracking, and tool use. Our method can be further adapted into an efficient sparse variant, yielding near-constant encoding cost with respect to context length while preserving (and sometimes \textit{improving}) performance. We also present simple theoretical arguments that illustrate why souping works and when it may fail, and discuss informally the advantages of weighted souping. Weighted state souping thus combines reusable context representations with selective retrieval, offering an efficient approach to language modeling over large collections of documents, tools, and other forms of structured context that increasingly dominate modern agentic workloads.
}

\metadata[Correspondence]{\sffamily William L. Tong: \url{wtong@g.harvard.edu}; Eran Malach: \url{e_malach@apple.com}}
\date{\sffamily October 18, 2026}

\begin{document}

\maketitle

\section{Introduction}
Many applications of large language models (LLMs) require long-lived context organized into discrete chunks of tokens: tool definitions for an agent \citep{schick2023toolformer,patil2025bfcl}, documents for question answering or retrieval-augmented generation \citep{rajpurkar2016squad,yang2018hotpotqa,trivedi2022musique,lewis2020rag,guu2020latent_rag}, and memory records or chat histories that persist across interactions \citep{packer2023memgpt,wu2024longmemeval,salemi2024lamp_personalization}. A useful memory architecture must retrieve relevant chunks from an increasingly crowded store without repeatedly processing the entire history.

In these settings, common practice concatenates the chunks of tokens and processes them as a single flat sequence. With the rising popularity of state space models (SSMs) and related fixed-state architectures \citep{gu2024mamba,dao2024mamba2,katharopoulos2020linear_transformers,schlag2021fast_weights_delta,sun2024ttt}, an alternative approach becomes possible. Rather than perform a single autoregressive pass on a concatenated sequence, each chunk can be independently encoded into a state. The resulting collection of states are then point-wise summed to initialize the model before it processes a query (Figure \ref{fig:soup_illustration}). By analogy to model souping \citep{wortsman2022modelsoups}, this procedure is called \textit{state souping} \citep{piualro2024state_soup,jafari2025surprising_soup,liu2025picaso_soup}: the quantities combined are input-dependent recurrent states, rather than model parameters.

State souping has an obvious efficiency advantage. State encodings can be computed in parallel and, for slowly changing chunks like memory records or tool definitions, cached and reused. This ability to cache representations makes souping naturally suited for handling the structured context that increasingly dominates modern agentic workloads: inference can depend on selected state encodings without repopulating the context with large memory artifacts, tool definitions, and documents. However, finite-state SSMs struggle to copy or retrieve from long contexts \citep{jelassi2024repeat_after_me,waleffe2024empirical_mamba,arora2024zoology}, and state souping performance similarly decays as more documents are added \citep{jafari2025surprising_soup}. This loss of selectivity limits souping precisely where persistent memory is most useful.

In this work, we introduce \textit{weighted state souping}, which learns attention-like weights over state encodings. Across document QA, needle-in-a-haystack retrieval, variable tracking, and tool use, weighting substantially improves robustness as contexts grow beyond training lengths. Our method can be further adapted into a sparse variant that permits near-constant encoding cost while preserving --- and sometimes \textit{improving} --- performance. Our main contributions are:
\begin{itemize}
    \item a novel weighted state souping procedure that permits robust selectivity, strong generalization on long contexts, and substantial efficiency gains over both Transformers and traditional SSMs,
    \item a comprehensive evaluation over diverse benchmarks including multi-document question answering, needle-in-a-haystack retrieval, variable tracking, and agentic tool use,
    \item a simple theoretical account of how souping successfully retrieves and composes across chunks, and its potential weaknesses compared against sequential models. We also discuss informally the advantages of weighted state souping.
\end{itemize}

\section{Related work}
State souping is a relatively recent method. \citet{piualro2024state_soup} introduced the term, and explored linear combinations of recurrent states for in-context skill transfer. \citet{jafari2025surprising_soup} study independently encoded document states, showing that finetuning enables simple pooling operators to support downstream QA and retrieval. PICASO \citep{liu2025picaso_soup} derives state-composition rules similar to souping for permutation-invariant inputs. We build on these works by showing how query-dependent weighting permits both significantly stronger length generalization and substantial efficiency wins.

State souping can be seen as an approximate form of state caching, a procedure whereby states generated from common prefixes can be cached and reused exactly for downstream requests. In Transformers, the procedure is more often called KV caching \citep{shazeer2019kv_cache,kwon2023vllm}. In SSMs, state caching has been explored particularly in the context of hybrid models, where common  KV caching methods for attention blocks may not transfer directly to SSM layers. In particular, \citet{pan2025marconi} explore SSM-specific cache eviction policies to maintain an efficient prefix cache for SSMs. \citet{shirokikh2026sparse_prefix_caching} explore a complementary idea in which cache checkpoints are saved along a sequence; future generation resumes from the deepest checkpoint that matches the current prefix. \citet{gopal2026structured_memory} investigate state caching in pure SSMs for retrieval augmented generation, and highlights its efficiency for edge-computing applications.

Souping is also related to state tuning, the process of learning an initial state for a recurrent model. This process has been studied in the context of parameter-efficient fine tuning (PEFT) for state space models \citep{galim2024ssm_peft,kang2025state_offset_tuning,yoshimura2025mambapeft}. \citet{kang2025state_offset_tuning} in particular find that \textit{state offset tuning}, in which a learned offset is added to every recurrent state, outperforms initial state tuning and other PEFT methods like LoRA. State tuning has also been studied in the RWKV community \citep{xiao2025state_tuning}, and historically in early RNNs \citep{forcada1995learning_init_state}.

Our weighted approach to state souping is similar to methods that attempt to introduce attention-like features to state space models, interpolating between Transformers and SSMs. In particular, \citet{akhauri2024attamba} introduce a method in which SSMs produce compressed key-value representations before applying attention. Similarly, Infini-attention \citep{munkhdalai2024_infini_att} combines local attention with a compressive recurrent memory. Memory Caching \citep{behrouz2026memory_caching_rnns} explores storing and combining recurrent memory checkpoints through attention-like weights. Our approach focuses on structured contexts that permit state souping, and our weighting does not retain token-by-token attention to cached states during generation, ultimately yielding both substantial efficiency benefits and performance gains.

Methods analogous to state souping have also been investigated in Transformers. Notably, \citet{eyuboglu2026cartridges} introduce a method for directly training KV caches to encode documents. The resulting ``cartridges'' can be provided in-context to approximate the behavior of the model had it been prompted with the original documents, similar to how SSM state encodings are used in souping. However, cartridges require offline tuning specific to each document, and multiple cartridges do not compose in-context without significant additional training \citep{hardalov2026cartridges_at_scale}. Our weighted state souping approach is also similar in motivation to block attention architectures that use block representations to select relevant blocks for token-level attention \citep{mohtashami2023landmark_att,lu2026moba} or use query-dependent scores to load KV cache blocks during decoding \citep{tang2024quest}.

\section{Weighted State Souping} \label{sec:weighted_state_souping}

\begin{figure}
    \vspace{-1em}
    \centering
    \includegraphics[width=0.9\linewidth]{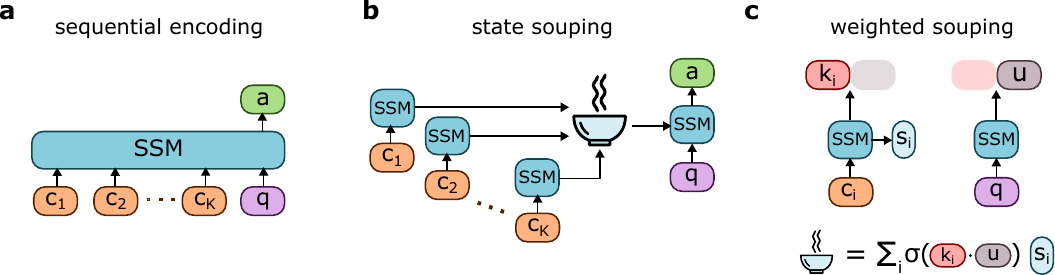}
    \caption{\textbf{Sequential encoding versus state souping.} \textbf{(a)} A typical \textcolor{NavyBlue}{SSM} operates by concatenating \textcolor{orange}{input chunks} together with the \textcolor{violet}{user query}, and performs a single, long autoregressive pass to produce the \textcolor{Green}{answer}. \textbf{(b)} A state souping \textcolor{NavyBlue}{SSM} encodes each \textcolor{orange}{input chunk} independently, then soups the resulting state encodings into a single encoding used to initialize the model as it processes the \textcolor{violet}{user query}. \textbf{(c)} In our weighted state souping method, we augment the \textcolor{NavyBlue}{SSM} with an additional linear readout that produces \textcolor{BrickRed}{key vectors} from the \textcolor{orange}{input chunks} and a \textcolor{gray}{query vector} from the \textcolor{violet}{user query}. The soup is constructed as a weighted sum with weighting function $\sigma$.}
    \label{fig:soup_illustration}
    \vspace{-0.1in}
\end{figure}

State souping operates in the following way. Let an input contain $k$ token chunks $\vr{c}_1,\ldots,\vr{c}_k$ and query $\vr{q}$. The SSM independently encodes each chunk into its final state $\vr{s}_i$, where the state encompasses the collection of recurrent states across all layers. We then perform a point-wise average over the $k$ encodings to make the ``soup'' $\vr{s}_0 = \frac{1}{k}\sum_{i=1}^k\vr{s}_i$. $\vr{s}_0$ becomes the initialization for the SSM as it processes the user query (Figure \ref{fig:soup_illustration}). We use the same SSM for encoding and generation, but these models can be different. Souping is not an inference-time only intervention, and requires either learning from scratch or additional finetuning on a pretrained SSM to teach the model to used this composed initialization \citep{jafari2025surprising_soup}.

Crucially, the only channel through which the input chunks influence the query is the souped initialization. Since SSMs perform simple linear state updates, a soup constructed through an average of states can be viewed as a naive approximation of the state construct from a concatenated, sequential input. Such a \textit{mean soup} is the approach used in prior work \citep{piualro2024state_soup,jafari2025surprising_soup}, and performs comparably with a sequential SSM. However, both mean souping and sequential SSMs struggle when operating over long contexts containing many input chunks \citep{jafari2025surprising_soup,chen2024stuffed_mamba,arora2024zoology}. As the number of chunks increases, a mean soup or finite SSM state becomes increasingly noisy, degrading performance. 

Query-dependent weighting may instead concentrate the souped state on relevant chunks, the inspiration for \textit{weighted state souping}. Our method is related to other schemes that weight input tokens in sequential SSMs \citep{akhauri2024attamba,behrouz2026memory_caching_rnns}, though our focus is on a souping architecture for chunked inputs. To compute these weightings, we add a learnable linear readout that maps the final activation to a key and query vector. Each chunk produces key $\vr{k}_i$. Starting from a zero-initialized state, a pass over the user query produces $\vr{u}$. We then compute the souped initialization as $\vr{s}_0 = \sum_{i=1}^k \sigma(\vr{k}_i \cdot \vr{u}) \, \vr{s}_i$, with weighting function $\sigma (\cdot)$. The linear readout and base model are jointly finetuned. See Figure \ref{fig:soup_illustration}c for an illustration of this \footnote{The weighting function $\sigma (\cdot)$ can include a normalization term, making it effectively a function of all keys. For simplicity, we omit the addition of arguments that occur only in the normalization.}. Unlike a Transformer's token-level attention, this procedure assigns one scalar weight per chunk, shared across all layers and heads and computed once from the original user query. The weights remain fixed during query processing and generation.

For $\sigma$, we use the softmax function with cosine-normalized scores (see Appendix \ref{sec:impl_details} for details). We also explore \textit{sparse souping}, an efficient variant in which we mask at inference time all but the top-$N_{top}$ logits before softmax, retaining normalized weights while avoiding retrieving zero-weight states. Masked states do not need to be retrieved, providing significant efficiency advantages for sparse souping (Section \ref{sec:computation}). Other reasonable choices of weighting functions include sigmoidal and linear weights, though we find that softmax performs best in our experiments. 

\section{Experiments} \label{sec:experiments}
We use a 1.5B Mamba-2 SSM \citep{dao2024mamba2} and compare softmax-weighted, sparse, and mean souping against sequential Mamba-2 and a matched Transformer with rotary positional encodings \citep{su2021rope}.

We evaluate our models across a diverse range of tasks. For document QA, we use SQuAD \citep{rajpurkar2016squad} for single-hop; HotpotQA \citep{yang2018hotpotqa} and MuSiQue \citep{trivedi2022musique} for multi-hop. For NIAH and variable tracking, we use RULER \citep{hsieh2024ruler}. For tool use, we use the Berkeley Function Calling Leaderboard (BFCL) \citep{patil2025bfcl}. To probe long-context performance on document QA and BFCL, we augment the input with additional distractors and plot performance for varying pool size $k$. If $k$ is smaller than the number of gold documents or gold tool definitions (relevant only for the $k=1$ datapoints), then the input consists only of gold samples. This suite tests selective retrieval and reasoning as the available context grows, crowding the relevant chunks with increasingly more distractors. Our models are finetuned to each task, using either the task-provided official training split (for document QA), a specialized post-training corpus (for tool use), or a disjoint subset of examples (for RULER). See Appendix \ref{sec:impl_details} for additional details on training, evaluation, and our benchmarks.

\subsection{Benchmark results}

\begin{figure}
    \centering
    \includegraphics[width=0.85\linewidth]{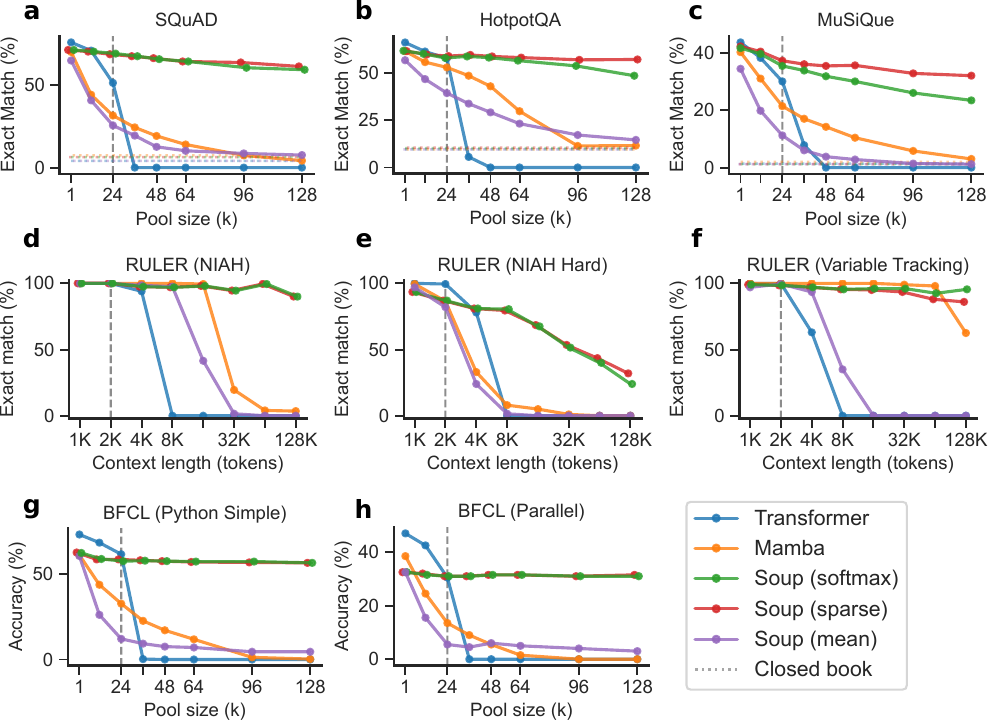}
    \caption{\textbf{Weighted state souping benchmark results.} Vertical gray dashed lines indicate the max pool size or context length observed during training. \textbf{(a - c)} Single-hop (SQuAD) and multi-hop (HotpotQA, MuSiQue) benchmarks. We plot accuracy as an exact match between the gold answer span and the model prediction. Dotted lines represent closed book accuracy, where no documents are prompted in-context. Sparse souping keeps top $N_{top}=4$ chunks. \textbf{(d,e)} NIAH subsets of RULER. In the easy version, a single needle consisting of a random number is hidden among a natural language haystack of variable size. In the hard version, the haystack consists entirely of needle-like distractors, requiring precise retrieval. Sparse souping keeps top $N_{top}=20$ chunks. \textbf{(e)} Variable tracking subset of RULER. The input consists of random variable assignments embedded in random noise sentences, and the model must infer the correct value of a randomly chosen variable. Sparse souping keeps top $N_{top}=20$ chunks. \textbf{(f)} Simple Python subset of BFCL. The model is prompted with a list of Python functions and descriptions, and must execute a correct function call given a user query. Sparse souping keeps top $N_{top}=4$ chunks. \textbf{(g)} Parallel subset of BFCL. The setup is the same as in \textit{(f)}, but the model needs to execute the function multiple times.}
    \label{fig:results}
\end{figure}

Figure \ref{fig:results} summarizes the main results; Appendix  \ref{sec:additional_figs} reports on additional metrics and task subsets.

In general, weighted state souping performs strongly even at large pool size $k$. While mean souping and sequential Mamba tend to deteriorate as the number of chunks increases, weighted souping matches their performance at small $k$, and persists in near-constant performance as $k$ increases to several times beyond the max training size. This phenomenon persists across a diversity of tasks and benchmarks, including in multi-hop settings like HotpotQA, MuSiQue, and RULER variable tracking where the model must link information across multiple chunks. Similar to other works \citep{malach2026infinity, zhou2024transformers}, we observe that Transformers do not extrapolate well to sequences longer than the training data, though their in-distribution performance tends to be strong.

Interestingly, sparse souping matches or improves on full softmax souping, particularly on multi-hop QA (HotpotQA, MuSiQue). Softmax mass may diffuse across many documents as $k$ increases. Sparse souping instead may beneficially concentrate attention in the most relevant indices. An analogous benefit has been observed in sparse attention for Transformers \citep{velivckovic2024softmax_not_enough,vasylenko2025long_context_sparse,zhang2024attention_dilution}. One weakness of sparse souping is the need to specify the number of chunks $N_{top}$ to keep. If $N_{top}$ is insufficiently large, critical chunks may be dropped and the model's performance suffers (Figure \ref{fig:ruler_results}f). Designing a weighting function with adaptive sparsity represents an interesting avenue for further exploration.

While the performance of weighted souping is generally strong, it underperforms at small $k$ in the \textit{parallel multiple} subset of BFCL, despite strong performance on \textit{python simple} and \textit{parallel} (Figure \ref{fig:bfcl_results}). Whether this reflects an optimization issue or a fundamental limitation remains open.

\subsection{Computational efficiency} \label{sec:computation}

\begin{figure}
    \centering
    \includegraphics[width=0.85\linewidth]{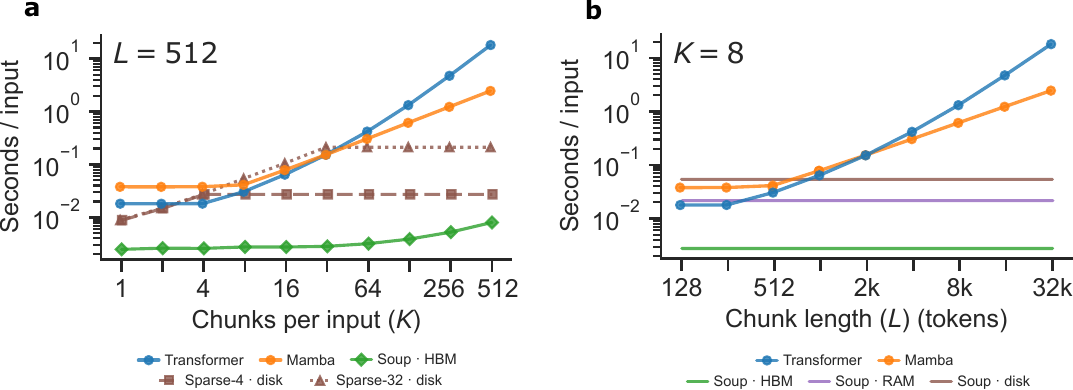}
    \caption{\textbf{Computational cost of souping.} We measure the time it takes to encode a set of input chunks on an AWS H100 node. For the souping models, we include the time it takes to encode a 128 token query and compute the softmax weights. The state encodings are retrieved from either GPU high bandwidth memory (HBM), system RAM, or system disk. We plot a subset of timings here, with the full results in Figure \ref{fig:full_compute_time}. We include two sparse souping variants, one that keeps the top $N_{top}=4$ documents and another that keeps the top $N_{top}=32$. (\textbf{a}) We hold the chunk length fixed at $L=512$ tokens and increase the total number of chunks. Encoding time therefore grows linearly with increasing chunks for all models. For the HBM model, at low $K$ encoding the query dominates the encoding time, resulting in a flat trend. (\textbf{b}) We hold the number of chunks fixed at $K=8$ and increase the length of each chunk. Since the number of chunks is constant, souping retrieves the same number of chunks for each query, resulting in a flat trend across all $L$ where the height is controlled by retrieval speed.}
    \label{fig:compute}
\end{figure}

The primary computational advantage of souping is in its ability to cheaply retrieve a state encoding from memory, rather than autoregressively re-encoding every input. Since inputs like memory artifacts and tool definitions change slowly, their state encodings may be routinely cached for future retrieval. Hence, a souped model may be considerably faster than non-souped alternatives, particularly as the context length grows. Figure \ref{fig:compute} compares the measured time to encode an input between souping, Mamba, and Transformer models. See Figure \ref{fig:full_compute_time} for additional timings.

There are a number of further considerations for souping. 1) Weighted souping must also factor in the time to perform an initial pass over the user query to generate a query vector, and the time it takes to compute the weightings. For a fixed key dimension, scoring and normalizing the weights costs $O(k)$ per query. Once the soup is constructed, the cost per generated token is constant with respect to $k$, since generation does not access the cached chunks again. In general, user queries average under 100 tokens \citep{zheng2024lmsys} and computing softmax weights is extremely fast, so the total added cost of computing the weights is typically negligible and amortized over the number of input chunks. 2) The retrieval time is heavily influenced by the specific memory holding the cached state encodings. GPU HBM is significantly faster than system RAM, which in turn is faster than system disk. Depending on where the encodings live, retrieval may in the worst case be slower than encoding from scratch. 3) Sparse souping improves efficiency further, since chunks assigned zero weight do not need to be retrieved. Hence, if the number of nonzero weights is constant with respect to the context length (which is true in our setting), then the encoding performance of sparse souping is effectively constant as context length grows\footnote{Technically, the number of weights increases with context size as well. However, since computing weights is considerably faster than retrieval, the context size needs to be impractically immense before the time it takes to compute the weights becomes noticeable.}.

\section{Theory: retrieval and composition} \label{sec:theory}
The experiments in Section~\ref{sec:experiments} provide the main evidence for the effectiveness of weighted state souping. To complement these empirics, we present a simple theoretical analysis that begins to answer the questions of why souping works, and what potential strengths and weaknesses it may have compared with sequential models. Our arguments are primarily expressivity-based: we provide constructions showing that both souping and sequential models can implement an induction-head retrieval task representative of our empirical benchmarks. We also discuss possible weaknesses that independent chunk encoding introduces, namely a sensitivity to chunk boundaries and the inability to encode cross-chunk representations. We conclude with a discussion of the benefits weighted souping has for retrieval over long, crowded contexts. Our analysis is by no means a complete theoretical account of state souping. Instead, we introduce simple intuitions that build towards a principled understanding of the method.

\subsection{Task}
We study a key-value version of the induction-head retrieval task \citep{olsson2022induction_heads} and its multi-hop extension \citep{sanford2024k_hop}.

Let $[N]=\{1,\ldots,N\}$, and introduce disjoint key and value vocabularies $\mathcal K=\{k_a:a\in[N]\}$ and $\mathcal V=\{v_a:a\in[N]\}$. An input context defines a partial function $f:[N] \rightarrow [N]$ by listing pairs $(k_a,v_{f(a)})$, with each key appearing at most once. Given a query $k_q$, the
$n$-hop induction target is
\[
    y_n(f,q)=v_{f^{n}(q)},
\]
whenever $f(q),\ldots,f^n(q)$ are defined. Thus one hop retrieves the value paired with $k_q$; each subsequent hop uses the index of the retrieved value as the next key. The input context is presented to the model as concatenation of all pairs followed by the query,
\[k_{a} \,, v_{f(a)} \,, k_{b} \,, v_{f(b)} \,, \ldots \,, k_{q} \,.\]
A sequential model reads this sequence in one pass. A souping model encodes each pair independently. A chain of length $n$ is a sequence of distinct indices $a_0,\ldots,a_n$ with $f(a_i)=a_{i+1}$. Multiple chains and irrelevant pairs may appear in the context. Pair order is arbitrary except in the explicitly ordered setting below.

\subsection{Model} \label{sec:model}
We consider a stripped-down linear attention SSM with Mamba-like history decay and augmented residuals. Let $z_1,\ldots,z_T$ be the tokens and let $\vr e(z_t)\in\R^d$ be orthonormal vocabulary embeddings, with $d\geq2N$. Initialize the residual stream as $\vr r_t^{(0)}=[\vr e(z_t);\vr0]\in\R^{2d}$. At layer $\ell\in[D]$, a causal width-two convolution copies the previous and current residual vectors into separate coordinates:
\begin{equation}
    \vr x_t^{(\ell)}=
    \begin{bmatrix}I_{2d}\\0\end{bmatrix}\vr r_{t-1}^{(\ell-1)}+
    \begin{bmatrix}0\\I_{2d}\end{bmatrix}\vr r_t^{(\ell-1)}
    =\begin{bmatrix}\vr r_{t-1}^{(\ell-1)}\\\vr r_t^{(\ell-1)}\end{bmatrix}\in\R^{4d}.
    \label{eq:theory-convolution}
\end{equation}
This is the copying role of the convolution identified by \citet{huang2025mamba_input_sel}. We then perform the SSM state update as
\begin{align}
H_t^{(\ell)}&=\alpha H_{t-1}^{(\ell)}+
(W_k^{(\ell)}\vr x_t^{(\ell)})(W_v^{(\ell)}\vr x_t^{(\ell)})^\top,
&\vr h_t^{(\ell)}&=(H_t^{(\ell)})^\top W_q^{(\ell)}\vr x_t^{(\ell)},
\label{eq:linear-attention-state}
\end{align}
where $H_t^{(\ell)}\in\R^{d\times d}$ is the SSM state; $W_k^{(\ell)},W_v^{(\ell)},W_q^{(\ell)}\in\R^{d\times4d}$ are the key, value, and query projections; and $\alpha\in(0,1]$ is the history decay. We omit bias terms.

The residuals are updated as
\begin{align*}
\vr r_t^{(\ell)}&=\vr r_t^{(\ell-1)}+W_o\vr h_t^{(\ell)},
&W_o&=\begin{bmatrix}0\\I_d\end{bmatrix}.
\end{align*}
In this way, $\vr r_t^{(\ell)}=[\vr e(z_t);\vr u_t^{(\ell)}]$ preserves the token embedding in the first block and accumulates updates in the second block. 

The final output logits produced by the model are $U\vr u_T^{(D)}$, where $U\in\R^{N\times d}$ is the readout matrix. To extract a concrete next-token prediction, we take the argmax over these logits.

Sequential encoding starts from zero-initialized recurrent states and a zero convolution buffer. For souping, each chunk $c_i$ is encoded independently from zero at every layer, giving states $H_i^{(\ell)}$. The query then starts from the weighted, souped initialization $H_{\mathrm{init}}^{(\ell)}=\sum_{i=1}^K w_iH_i^{(\ell)}$, with its convolution buffer reset to zero. The weights $w_i$ are shared across layers and fixed during query processing. Unless stated otherwise, chunk encoding and weighting have no access to the chunks' original positions: permuting the chunks permutes their weights and leaves the soup unchanged. This assumption is accurate of mean and content-based softmax souping. Our discussion below focuses on expressivity rather than strict learnability, and assumes exact arithmetic and suitably chosen positive weights.

\begin{remark} \label{remark:linear_states_soup}
Linear states permit souping. For a single layer with matching convolution resets, sequential encoding equals a weighted sum of independently encoded chunk states.
\end{remark}
There is a simple algebraic reason that states from this model can be souped. Split a context into chunks $c_1,\ldots,c_K$ of lengths $m_1,\ldots,m_K$, and let $H(c_i)$ be the final state obtained by encoding $c_i$ from zero. Let $H_{\mathrm{seq}}(c_1\Vert\cdots\Vert c_K)$ be the final state obtained by encoding the concatenated sequence of chunks. If we reset the convolution at the same boundaries in both schemes, then unrolling~\eqref{eq:linear-attention-state} gives
\begin{equation}
    H_{\mathrm{seq}}(c_1\Vert\cdots\Vert c_K)
      =\sum_{i=1}^K\alpha^{\sum_{j>i}m_j}H(c_i).
    \label{eq:sequential-is-soup}
\end{equation}
Thus a sequential linear state is itself a weighted soup of that layer's chunk states, an identity also observed in prior state-composition work \citep{liu2025picaso_soup}. With $\alpha=1$ and a one layer model, sequential and souping are functionally equivalent.

However, there are a few important differences to keep in mind. On one hand, learned state souping replaces the fixed recency weights $\alpha^{\sum_{j>i}m_j}$ in~\eqref{eq:sequential-is-soup} with content-dependent weights, permitting selection based on relevance rather than position. On the other hand, independently encoded chunks cannot incorporate information from other chunks during encoding. Particularly when a model has two or more layers, its sequential upper-layer representations may therefore differ from its independently encoded representations, potentially impacting efficiency when cross-chunk representations would be advantageous (see Remark~\ref{remark:capacity} below).

\subsection{One-hop induction}
We begin by considering the one-hop induction-head task, reflecting the setting in \citet{olsson2022induction_heads}. We observe that both a sequential and souping model can solve the task, but chunk boundaries that separate the task's key-value pairs can make souping fail.

\begin{proposition} \label{prop:one-hop}
Suppose that every key-value pair is contained in one chunk, each key occurs at most once, and the queried pair's chunk receives positive soup weight. A single layer model with sufficient width (Section \ref{sec:model}) solves one-hop induction for both sequential encoding and weighted state souping.
\end{proposition}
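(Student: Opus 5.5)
We give an explicit single-layer construction. The convolution \eqref{eq:theory-convolution} lets the value token see its preceding key, so each key-value pair writes an outer product $\vr e(k_a)\vr e(v_{f(a)})^\top$ into the state. The query token then reads this state out with $\vr e(k_q)$. Orthonormality of the embeddings makes the readout exact up to a positive scalar, which leaves the argmax unchanged. Since $D=1$ and $\vr u_t^{(0)}=\vr 0$, we have $\vr x_t=[\vr e(z_{t-1});\vr 0;\vr e(z_t);\vr 0]$, with the first block zero when the convolution buffer is reset. We choose the projections as follows:
\begin{itemize}
\item $W_k$ extracts the previous-token embedding, $W_k\vr x_t=\vr e(z_{t-1})$.
\item $W_v$ extracts the current-token embedding, $W_v\vr x_t=\vr e(z_t)$.
\item $W_q$ extracts the current-token embedding, $W_q\vr x_t=\vr e(z_t)$.
\item $U$ has rows $\vr e(v_b)^\top$ for $b\in[N]$.
\end{itemize}
These are all linear maps $\R^{4d}\to\R^d$, which is where the width assumption $d\ge 2N$ is used.

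\textbf{Sequential case.} Unroll \eqref{eq:linear-attention-state} to get
\[
H_T=\sum_{t\le T}\alpha^{T-t}\,\vr e(z_{t-1})\vr e(z_t)^\top.
\]
The readout at the final token $z_T=k_q$ is then
\[
\vr h_T=\sum_t \alpha^{T-t}\langle \vr e(z_{t-1}),\vr e(k_q)\rangle\,\vr e(z_t).
\]
By orthonormality, only terms with $z_{t-1}=k_q$ survive. Because each key occurs at most once and the query is the last token, the unique surviving term has $t$ equal to the position of $v_{f(q)}$. Hence $\vr h_T=\alpha^{T-t^*}\vr e(v_{f(q)})$, and $U\vr h_T=\alpha^{T-t^*}\,\mathbf{e}_{f(q)}$. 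The argmax is therefore $v_{f(q)}$, since $\alpha>0$.

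\textbf{Souping case.} Each chunk is encoded from zero with a reset buffer. A chunk containing the pair $(k_a,v_{f(a)})$ as consecutive tokens contributes a positive multiple of $\vr e(k_a)\vr e(v_{f(a)})^\top$. It also contributes cross terms $\vr e(z_{t-1})\vr e(z_t)^\top$ for other adjacent pairs inside the chunk, such as (value, next key), and the first token pairs with the zero buffer and contributes nothing. The query is processed from $H_{\mathrm{init}}=\sum_i w_iH_i$ with a zero buffer. If the query is a single token $k_q$, then $H_1=\alpha H_{\mathrm{init}}$, because the query's own update is $\vr 0\,\vr e(k_q)^\top=0$. The readout is $\vr h_1=\alpha\sum_i w_i H_i^\top\vr e(k_q)$.

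The same orthonormality argument kills every term except those whose previous token is $k_q$. The only possibly troublesome terms are cross terms of the form $\vr e(k_q)\vr e(z)^\top$ where $k_q$ is followed by something other than its value. This cannot happen: $k_q$ appears exactly once, and by hypothesis it sits in the same chunk, immediately before $v_{f(q)}$. We must also check that $k_q$ is not the last token of its chunk, which holds because the pair lies within one chunk. So the readout is $\alpha w_{i^*}\alpha^{m}\vr e(v_{f(q)})$ for some $m\ge0$, which is a positive multiple because $w_{i^*}>0$. The argmax is again correct.

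\textbf{Main obstacle.} The step needing care is the bookkeeping of spurious cross terms, both between adjacent pairs in the sequential stream and within chunks. The point to make explicit is that such a term can only fire if its previous token equals $k_q$. This is ruled out by uniqueness of keys together with the one-chunk hypothesis. The same observation explains why a chunk boundary falling between $k_q$ and $v_{f(q)}$ would break the construction: the buffer reset destroys the outer product.
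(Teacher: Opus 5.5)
Your proof is correct and is essentially the paper's construction. Both use previous-token keys, current-token values and queries, and readout $U=E_V^\top$, giving the same positive coefficient $w_{i(q)}\alpha^{m+1}$ on $\vr e(v_{f(q)})$ in the souping case.

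The only difference is in the projections. The paper uses $W_k=[A_K\ 0\ 0\ 0]$, $W_v=[0\ 0\ A_V\ 0]$, $W_q=[0\ 0\ A_K\ 0]$, so key tokens write nothing and value tokens read nothing. That leaves no cross terms and makes every context output zero, which the multi-hop construction of Proposition~\ref{prop:multi-hop} reuses. You instead use full token selections and kill the (value, next-key) cross terms at readout by key/value orthogonality, which is equally valid for one hop. A minor correction: $d\geq 2N$ is what permits the $2N$ orthonormal embeddings, not what makes your projections exist.
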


See Section \ref{sec:proof_one_hop} for the construction proving this statement. Proposition \ref{prop:one-hop} demonstrates that even a single-layer SSM is sufficiently expressive to perform one-hop induction. Together with our earlier discussion in Remark \ref{remark:linear_states_soup}, we observe that both souping and sequential models can readily perform in-context retrieval. However, souping is sensitive to chunk boundaries, as we establish in the next proposition.

\begin{proposition} \label{prop:split-pair}
Suppose that chunk boundaries split every key-value pair (i.e. every token is encoded as a separate chunk). Then no souping model solves all one-hop induction instances.
\end{proposition}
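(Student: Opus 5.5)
The plan is an indistinguishability argument built on the permutation-invariance assumption of Section~\ref{sec:model}. When every token is encoded as its own chunk, the only channel from the context to the query is the souped initialization $H_{\mathrm{init}}^{(\ell)}=\sum_i w_i H_i^{(\ell)}$ at each layer. The chunk encoding $H_i^{(\ell)}$ depends only on the single token $z_i$, because the state and the convolution buffer are both reset to zero. The weight $w_i$ depends only on the content of chunk $i$ (together with the query and the multiset of keys through normalization), not on its original position. Hence the soup, at every layer, is a function of the \emph{multiset} of context tokens and the query. I would then exhibit two one-hop instances that have the same multiset of tokens and the same query but different targets.

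\textbf{Construction.} Take $N\ge 2$ and two keys $k_1,k_2$ with two values $v_1,v_2$. Define instance A by the context $k_1,v_1,k_2,v_2$, so that $f(1)=1$ and $f(2)=2$. Define instance B by the context $k_1,v_2,k_2,v_1$, so that $f(1)=2$ and $f(2)=1$. Both use query $k_q=k_1$. The target is $v_1$ in A and $v_2$ in B. Both instances satisfy the hypotheses: each key occurs once, and the pairs are well formed. The token multisets $\{k_1,v_1,k_2,v_2\}$ coincide.

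\textbf{Key step.} Show that the souped state $H_{\mathrm{init}}^{(\ell)}$ is identical in A and B at every layer $\ell\in[D]$. Chunks are single tokens and each is encoded from zero state and zero buffer, so $H_i^{(\ell)}=G^{(\ell)}(z_i)$ for a fixed map $G^{(\ell)}$. This holds for deeper layers too, since the residual stream of a single-token chunk depends only on that token. Under the assumption that permuting the chunks permutes their weights, the weight attached to a given token's chunk is the same in both instances. The soup is therefore a sum over the same multiset of (weight, state) pairs. The query pass starts from identical states with a reset buffer and processes identical tokens, so the output logits $U\vr u_T^{(D)}$ coincide, as does the argmax. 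Since the targets differ, at least one instance is answered incorrectly, so no souping model solves all instances.

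\textbf{Main obstacle.} The delicate step is justifying that the weights cannot break the tie. If weights were allowed to depend on position, a model could in principle recover pairings. I would argue directly from the stated assumption that weighting has no access to positions, and note that this covers both mean and content-based softmax souping. Tie-breaking inside the argmax is irrelevant because the logits are literally equal. If exact equality of the outputs requires deterministic argmax tie-breaking, I would fix one such rule, since it is applied to identical vectors in both instances.
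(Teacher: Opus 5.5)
Your proposal is correct and follows the paper's proof: both exhibit two contexts that have the same multiset of singleton chunks and the same query but different one-hop targets, then use permutation invariance of the soup to force identical outputs. Your instance with $f(1)=1,\ f(2)=2$ versus its swap needs only $N\ge 2$, whereas the paper uses four distinct indices $a,b,u,v$. Your explicit argument that each singleton chunk's state at every layer depends only on its token spells out what the paper asserts in one line.
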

\begin{proof}
Choose distinct indices $a,b,u,v$. Consider the two token sequences
\begin{align*}
    C  &= (k_a,v_u,k_b,v_v), & y_1(C,a)&=v_u,\\
    C' &= (k_a,v_v,k_b,v_u), & y_1(C',a)&=v_v.
\end{align*}
After splitting into singleton chunks, both inputs have the same multiset $\{k_a,k_b,v_u,v_v\}$. For the same query, souping therefore produces the same state for $C$ and $C'$. Hence, the model must return the same output on both inputs, though their labels differ, failing the task on at least one input.
\end{proof}

The setting in Proposition \ref{prop:split-pair} is extreme, but underlines the sensitivity of souping to correctly formed chunks. However, this does not mean that souping cannot link information between chunks --- our multi-hop document QA and variable tracking experiments demonstrate that souping models can robustly link information across multiple chunks. In the next section, we observe that souping can readily link disparate chunks when performing the multi-hop generalization of our task.

\subsection{Multi-hop induction}
We consider the multi-hop generalization of our induction-head task. We discuss how souping can continue linking information between chunks to solve the task, though under certain circumstances, it may be less efficient that a sequential input model.

\begin{proposition} \label{prop:multi-hop}
Suppose that every key-value pair is contained in one chunk, each key occurs at most once, and every chunk containing an edge of the queried $n$-hop path receives positive soup weight. Souping and sequential models can solve the $n$-hop induction task by composing the construction in Proposition~\ref{prop:one-hop} across $n$ layers.
\end{proposition}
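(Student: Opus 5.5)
The plan is to build an explicit $n$-layer network in which layer $\ell$ performs the $\ell$-th hop. It relies on one structural fact: at every layer, the key and value projections read only the token-embedding blocks $\vr e(z_{t-1}),\vr e(z_t)$ of $\vr x_t^{(\ell)}$. Then the chunk states $H_i^{(\ell)}$, and the sequential states, are identical across layers and coincide with the single-layer states of Proposition~\ref{prop:one-hop}. This reduces the multi-hop question to chaining $n$ one-hop lookups at the final query position. That is why the argument works equally for souping and for sequential encoding.

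\textbf{Choice of width.} I take the second block large enough to hold $n$ disjoint slots $S_1,\ldots,S_n$, each a copy of the $N$-dimensional value space; $d\ge nN$ together with $d\ge 2N$ suffices. Write $\vr e_\ell(v_b)$ for the copy of $e(v_b)$ in slot $S_\ell$.

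\textbf{Layer $\ell$ weights.} Set
\[
W_k^{(\ell)}\vr x_t=P_{\mathcal K}\vr e(z_{t-1}),\qquad W_v^{(\ell)}\vr x_t=\vr e_\ell(\text{value part of }z_t).
\]
Here $P_{\mathcal K}$ projects onto the span of the key embeddings, and $W_v^{(\ell)}$ is zero unless $z_t\in\mathcal V$. The only nonzero outer products are therefore $e(k_a)\,\vr e_\ell(v_{f(a)})^\top$, produced at the value token of each pair. Each such pair lies inside one chunk by hypothesis, so the convolution buffer sees $k_a$ immediately before $v_{f(a)}$ in both encodings.

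The query map differs between the first layer and the rest. At layer $1$, $W_q^{(1)}$ reads the current-token block and maps $e(k_q)\mapsto e(k_q)$. At layer $\ell\ge2$, $W_q^{(\ell)}$ reads only slot $S_{\ell-1}$ of $\vr u_t^{(\ell-1)}$, through the linear map $\vr e_{\ell-1}(v_b)\mapsto e(k_b)$. This map is the fixed index bijection between value and key vocabularies.

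\textbf{State at layer $\ell$.} For souping, the state at the query position is $H^{(\ell)}=\sum_i w_i H_i^{(\ell)}$, plus a zero contribution from the single query token. For sequential encoding it is $\sum_{\text{pairs}}\alpha^{(\cdot)} e(k_a)\vr e_\ell(v_{f(a)})^\top$. In both cases
\[
H^{(\ell)}=\sum_a \beta_a\, e(k_a)\,\vr e_\ell(v_{f(a)})^\top ,
\]
where each $\beta_a$ is positive whenever the chunk containing the pair $a$ has positive weight (for sequential encoding, always).

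\textbf{Induction on $\ell$.} The claim is that $\vr u_T^{(\ell)}$ restricted to $S_\ell$ equals $\gamma_\ell\,\vr e_\ell(v_{f^\ell(q)})$ with $\gamma_\ell>0$. For the base case, orthonormality of the embeddings and uniqueness of keys give $(H^{(1)})^\top e(k_q)=\beta_q\vr e_1(v_{f(q)})$. For the inductive step, the layer-$\ell$ query is $\gamma_{\ell-1}e(k_{f^{\ell-1}(q)})$, so the lookup returns $\gamma_{\ell-1}\beta_{f^{\ell-1}(q)}\vr e_\ell(v_{f^\ell(q)})$. Positivity uses exactly the hypothesis that every edge on the path has positive weight. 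Since the writes go to disjoint slots, earlier slots never contaminate later queries. Finally the readout $U$ reads slot $S_n$ and identifies $v_b$ with the index $b$. The logits are then $\gamma_n$ times an indicator, and the argmax is $y_n(f,q)$.

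\textbf{Main obstacle: spurious residual updates at non-final positions.} If chunk tokens acquired nonzero $\vr u$ at some layer, the upper-layer states of souping and sequential encoding could diverge, which is exactly the concern raised after Remark~\ref{remark:linear_states_soup}. The choice above that keys and values read only embedding blocks sidesteps this entirely: whatever a chunk position writes into $\vr u$ is never read by $W_k$ or $W_v$. I will still check that the query outputs at chunk positions are harmless, so the states stay layer-independent. At key tokens, the query $e(k_a)$ meets only pairs with different keys (keys occur once), giving zero. At value tokens, the layer-$1$ query projection vanishes. The last check is that the convolution reset at the query start does not disturb the query: it only affects $\vr e(z_{t-1})$, which is zero for the first query token and harmless because the query at position $T$ reads the current token or $\vr u$.
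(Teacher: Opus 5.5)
Your construction is correct. It shares the paper's skeleton: every layer's context state holds exactly the original associations $\vr e(k_a)\vr e(v_{f(a)})^\top$, so souped and sequential states agree layer by layer. The query position then chains lookups through the value-to-key bijection, which is $F=E_KE_V^\top$ in the paper.

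The genuine difference is how the stale working vector at the query is handled.

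\textbf{The paper's route.} It keeps $d\geq 2N$ independent of $n$ and \emph{overwrites} the working vector. For $\ell\geq2$, the query token uses key $\vr b$, value $-\vr u$ and query $F\vr u+\vr b$, where $\vr b=\vr e(v_1)$ is orthogonal to all association keys. Because the state is read after the current write, the attention output is $M^\top F\vr u-\vr u$. The additive residual then retains only the new retrieval.

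\textbf{Your route.} You let working vectors accumulate in $n$ mutually orthogonal slots. Each query projection and the readout look only at the most recent slot. This costs width $d\geq\max(2N,nN)$, so each layer's state grows with the number of hops. In the paper's construction, extra hops cost only depth; in yours they cost both depth and width.

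\textbf{What your route buys.} Your key and value projections never read $\vr u$. Hence the layer-independence of context states is immediate, and you need not track whatever accumulates at context positions or in $\vr r_{T-1}$ in the sequential model. The paper's $W_v^{(\ell)}$ reads $-\vr u_t$, so it must separately verify that context working vectors stay zero.

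Since the model of Section~\ref{sec:model} only imposes $d\geq2N$, your width choice is admissible for this existence statement.

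One minor slip: the state seen by the query is $\alpha\sum_i w_iH_i^{(\ell)}$, because the decay is applied at the query token, not $\sum_i w_iH_i^{(\ell)}$. The factor $\alpha>0$ is simply absorbed into your $\beta_a$.
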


See Section \ref{sec:proof_multi_hop} for the construction proving this statement. In this way, we establish the basic competency of souping models to solve these classic in-context retrieval tasks. However, one advantage sequential models retain over souping is their ability to learn cross-chunk representations in deeper layers. Because souping encodes chunks independently, it remains impossible to directly encode information between chunks in deeper layers. This expressivity difference may lead to capability separations between sequential and souping models.

For instance, suppose the inputs to our multi-hop induction task are favorably ordered. Consider $C$ node-disjoint chains presented consecutively in forward order, with each chain taking the form
\begin{equation}
    (k_{a_{c,0}},v_{a_{c,1}}),\ (k_{a_{c,1}},v_{a_{c,2}}),\ \ldots,\
    (k_{a_{c,n-1}},v_{a_{c,n}}).
    \label{eq:ordered-chains}
\end{equation}
Suppose the query always names a chain's beginning and asks for its $n$-hop endpoint. Then we observe the following.

\begin{remark} \label{remark:capacity}
    For the ordered, endpoint-only task with $n=2^L$ and $\alpha=1$, an $L+1$-layer sequential model can form a final-layer state containing only the $C$ start-to-end associations, effectively compressing the sequence. Souping models that independently encode single pairs in isolation cannot implement this same compression.
\end{remark}

The construction is given in Section~\ref{sec:ordered_construction}. Each of the first $L$ layers stores reverse associations whose hop length doubles with depth, allowing later tokens to identify earlier nodes in their chain. The final layer then writes
\[
    H_{\mathrm{end}}=\sum_{c=1}^C\vr e(k_{a_{c,0}})\vr e(v_{a_{c,n}})^\top,
\]
so a query at a chain start directly retrieves its endpoint. Independently encoding a single pair cannot perform this cross-chunk preprocessing, though souping can still compose the original relations at query time as in Proposition~\ref{prop:multi-hop}.

A simple counting argument clarifies why the number of retained associations matters for storage.

\begin{proposition} \label{prop:capacity}
A fixed $B$-bit state that must recover $m$ associations in $[N]$ requires $B\geq\lceil m\log_2 N\rceil$, for $N\geq2$.
\end{proposition}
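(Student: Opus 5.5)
This is a pigeonhole (counting) argument. The task is to formalize "recover $m$ associations in $[N]$": the state is a map $\mathrm{enc}$ from the set of admissible association tables into $\{0,1\}^B$. Recovery means there is a decoder that, given the state and a query index (one of the $m$ keys), returns the correct associated value in $[N]$. The hard family I will use fixes $m$ distinct keys $a_1,\ldots,a_m$ and lets the values $f(a_1),\ldots,f(a_m)$ range freely over $[N]$. When the key set is part of the context, as in the induction task, the key set is held fixed. Keys and values come from disjoint vocabularies, so every assignment is a valid context. This gives $N^m$ distinct instances.

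First I will show that $\mathrm{enc}$ must be injective on this family. Suppose two instances $f\neq f'$ produced the same $B$-bit state. They differ at some key $a_j$. Querying $a_j$ then produces the same output for both, since the decoder sees only the state and the query. One of the two answers is therefore wrong. This mirrors the indistinguishability argument in the proof of Proposition~\ref{prop:split-pair}.

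Injectivity gives $2^B\ge N^m$, so $B\ge m\log_2 N$. Since $B$ is an integer, $B\ge\lceil m\log_2 N\rceil$. The hypothesis $N\ge2$ makes $\log_2 N\ge 1>0$, so the bound is nontrivial and well defined.

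The only delicate step is the modelling one. I must make sure that "must recover" quantifies over all $N^m$ value assignments, or at least over a family of that size. If the intended family is more restricted (for instance, $f$ injective, as in node-disjoint chains), the same argument gives only $\log_2\bigl(N!/(N-m)!\bigr)$. So I will state explicitly that values may repeat, which the task definition permits: only keys are required to occur at most once. I will also note that the decoder may be arbitrary, with no computational constraint, so the bound applies to any fixed-size state, in particular the souped or sequential $H$ quantized to $B$ bits.
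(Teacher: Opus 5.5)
Your proposal is correct and follows the paper's own proof essentially verbatim. You count the $N^m$ value assignments, argue that distinct assignments must give distinct states because they disagree on some query, and conclude $2^B\ge N^m$. Your extra care (fixing the key set, allowing repeated values, and using integrality of $B$ to get the ceiling) simply makes explicit what the paper's three-line argument leaves implicit.
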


\begin{proof}
There are $N^m$ possible assignments. Distinct assignments must yield distinct states, since they disagree on at least one query. Thus $2^B\geq N^m$. Taking the log of both sides results in the desired bound $B\geq\lceil m\log_2 N\rceil$, for $N\geq2$.
\end{proof}

For the ordered task, the final summary retains $C$ endpoint associations instead of all $Cn$ edges. Hence, Proposition \ref{prop:capacity} suggests that the SSM state for a sequential model could be smaller than that of a souping model to learn a task with a fixed number of chains $C$ and hops $n$. Indeed, to learn $Cn$ associations, a souping model would require $B = Cn \log_2 N$ bits, whereas a sequential model needs a factor of $n$ fewer with just $B = C \log_2 N$ bits.

More generally, sequential encoding may permit useful composed representations in deeper layers. The doubling construction is similar in spirit to the pointer-doubling strategy of \citet{sanford2024k_hop}, and gives a logarithmic-depth sequential solution in this ordered setting. We conjecture that this may yield a depth separation from souping under suitable width and precision constraints.

\subsection{Weighted state souping permits selective retrieval}
While souping may be sensitive to chunk boundaries and exhibit less efficiency for favorably ordered inputs, weighted souping nonetheless retains a decisive advantage over sequential models for long-context retrieval. Poor long-context performance can be understood as a consequence of gating and noisy contexts. In a sequential model with Mamba-like history decay, a contribution at distance $\Delta$ is exponentially decayed by a factor $\alpha^\Delta$. \citet{tong2026gating} have formalized the resulting tradeoff: reducing decay preserves distant signals but permits distractor variance to accumulate, whereas stronger decay controls that variance at the cost of attenuating earlier tokens. Hence, no setting permits a sequential model to retrieve from every token in a long context with even reliability.

Weighted souping bypasses this tradeoff by permitting direct attention on the relevant chunks. In this scheme, distractor variance is determined only by the selected set of chunks rather than the full context size. Earlier chunks are not attenuated compared with later chunks, so weighting is computed evenly across the whole context. Provided that the weights remain concentrated as the number of distractors increases, weighted souping may maintain strong long-context performance when sequential and mean-souped variants fail. In some sense, weighted souping interpolates between recurrent accumulation and attention-like selection. It is not equivalent to full attention (souping weights are scalar, chunk-level, and computed once from the original query). Nonetheless, our experiments support the strong retrieval performance of weighted souping models. Together with their efficiency benefits, weighted state souping presents an attractive paradigm for future models.

\section{Discussion}
Weighted state souping augments reusable context encoding with query-dependent selection. A lightweight routing head combines cached chunk states into a fixed-size recurrent initialization, improving long-context performance across document QA, retrieval, variable tracking, and tool use benchmarks. The results show that a fixed state need not imply uniformly compressing the entire available collection: each query determines which information enters the working state. Inference-time sparsification often improves both accuracy and state-transfer cost, particularly beyond the training pool size. The theory complements these results by demonstrating retrieval and cross-chunk composition, while identifying information lost at chunk boundaries and discussing the potential advantages of sequential processing.

There remain many avenues for further exploration. Our focus in these experiments has been on single-turn interactions, but generalizing souping to multi-turn interactions featuring several rounds of back-and-forth between user and assistant remains an unexplored direction. There are also many opportunities for improving weighted state souping, for instance by replacing fixed-$N$ sparse souping with adaptive sparsity that adjusts to the problem, exploring alternative weighting schemes, or using thinking tokens in conjunction with souping. Our theoretical discussion offers initial intuitions for why state souping works and when it may fail, but there remain many opportunities for investigating potential memory and depth separations between souping and sequential models.

\section*{Acknowledgments}
This work was performed while WLT was at Apple. WLT is supported by a Kempner Graduate Fellowship. CP is supported by an NSF CAREER Award (IIS-2239780), DARPA grants DIAL-FP-038 and AIQ-HR00112520041, the Simons Collaboration on the Physics of Learning and Neural Computation, and the William F. Milton Fund from Harvard University. This work has been made possible in part by a gift from the Chan Zuckerberg Initiative Foundation to establish the Kempner Institute for the Study of Natural and Artificial Intelligence. LLMs were used in implementing the experiments, assisting with proofs, and polishing the writing.

\bibliographystyle{iclr2027_conference}
\bibliography{refs}

\clearpage
\appendix

\section{Additional figures} \label{sec:additional_figs}

\begin{figure}[h]
    \centering
    \includegraphics[width=0.85\linewidth]{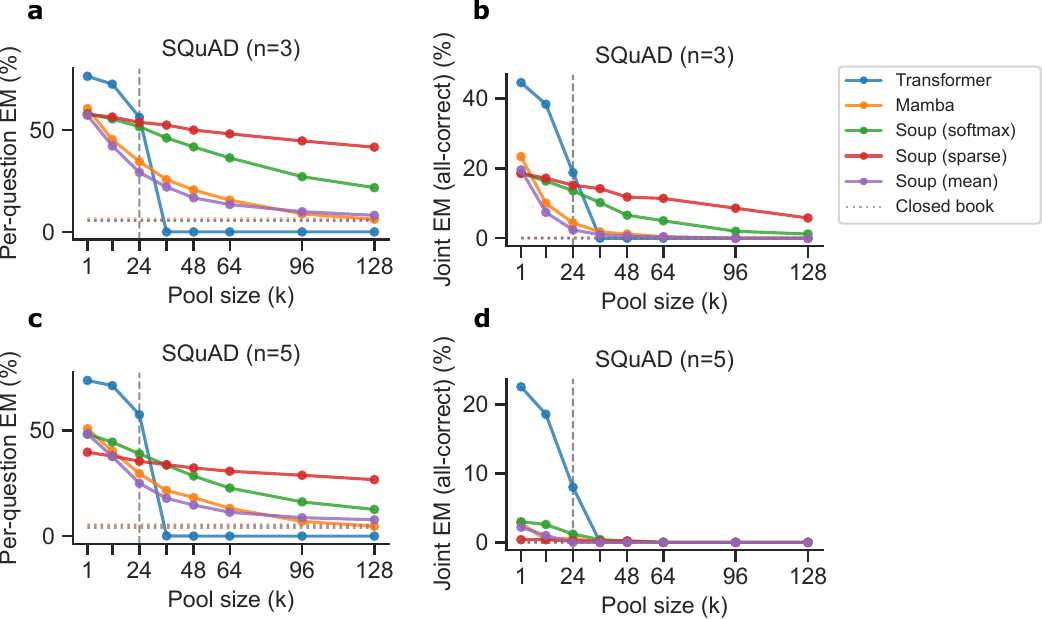}
    \caption{\textbf{Squad-n benchmark results.} As a further test of souping when the task requires attending to multiple documents, we construct a task based on SQuAD called \textit{SQuAD-n}. Each example consists of $n$ individual SQuAD document-question-answer sets. The model is prompted with all $n$ documents (mixed among $\text{max}(k-n,0)$ distractors) followed by the questions, and must produce the answer spans corresponding to each question. We plot both per-question exact match accuracy in (\textbf{a, c}) as well as joint accuracy in (\textbf{b, d}) for $n=3$ and $n=5$. Souping generally performs competitively with vanilla Mamba, and sparse souping in particular maintains long context generalization better than full souping. However, both Mamba and souping underperform Transformers on in-context $K$ in this setting. This may be in part due to the Transformer's sharper ability to track its context, and determine which questions have already been answered and which require an answer still.}
    \label{fig:squad_n_results}
\end{figure}

\begin{figure}
    \centering
    \includegraphics[width=\linewidth]{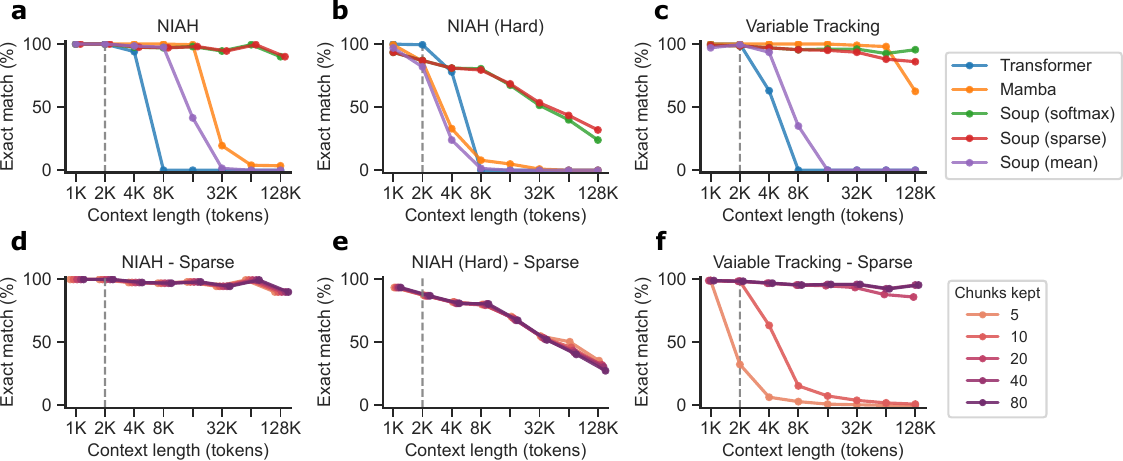}
    \caption{\textbf{Ruler benchmark, additional results.} In the main text, we present results on needle-in-a-haystack \textbf{(a,b)} and variable tracking \textbf{(c)}. We additionally plot a sweep over the number of chunks $N_{top}$ kept by the sparse souping model in \textbf{(d - f)}. For the NIAH tasks, where only a single chunk is relevant, the value of $N_{top}$ is irrelevant. However, for the variable tracking task, relevant variable assignments are distributed across many more chunks, so below a certain threshold, the model fails to learn the task well.}
    \label{fig:ruler_results}
\end{figure}

\begin{figure}
    \centering
    \includegraphics[width=\linewidth]{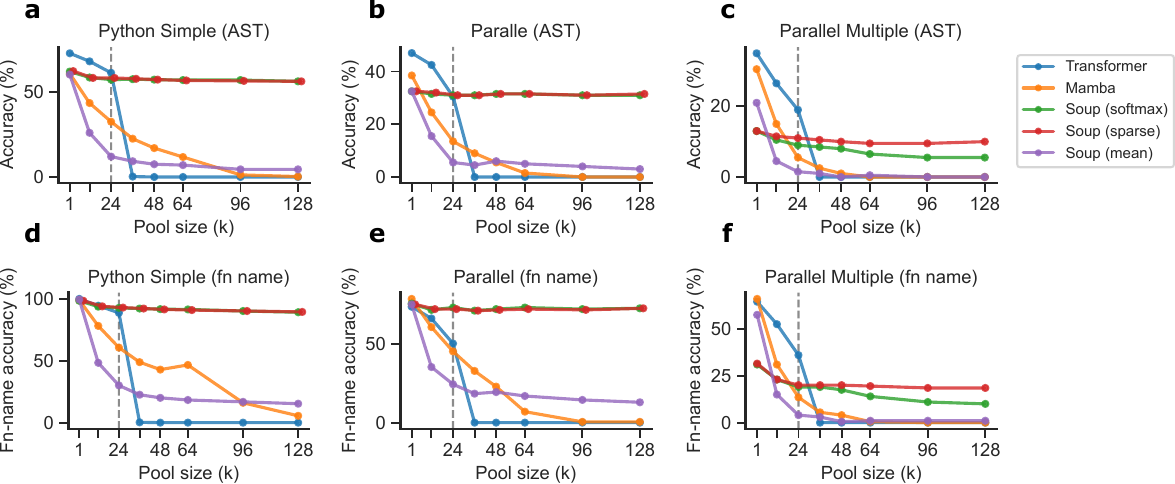}
    \caption{\textbf{BFCL benchmark, additional results.} In the main text, we present results on the Python Simple split of BFCL (\textbf{a}), in which the model must produce a single tool call given a list of tool definitions and a user query. Here, we also plot performance on the Parallel split (the model must call a single tool multiple times) and the Parallel Multiple split (the model must call multiple tools multiple times). We measure accuracy by BFCL's built-in abstract syntax tree (AST)-based checker (\textbf{a - c}) as well as a match based on function name alone (\textbf{d - f}). In general, weighted souping performs well for both Python Simple and Parallel, but a substantial gap opens on small, in-distribution $K$ for Parallel Multiple. Mean souping performs better than weighted souping at $K=1$ (where only gold tool definitions are prompted), indicating that souping may still work well, but the model is struggling to simultaneously learn appropriate weights and use a souped state initialization.}
    \label{fig:bfcl_results}
\end{figure}

\begin{figure}
    \centering
    \includegraphics[width=\linewidth]{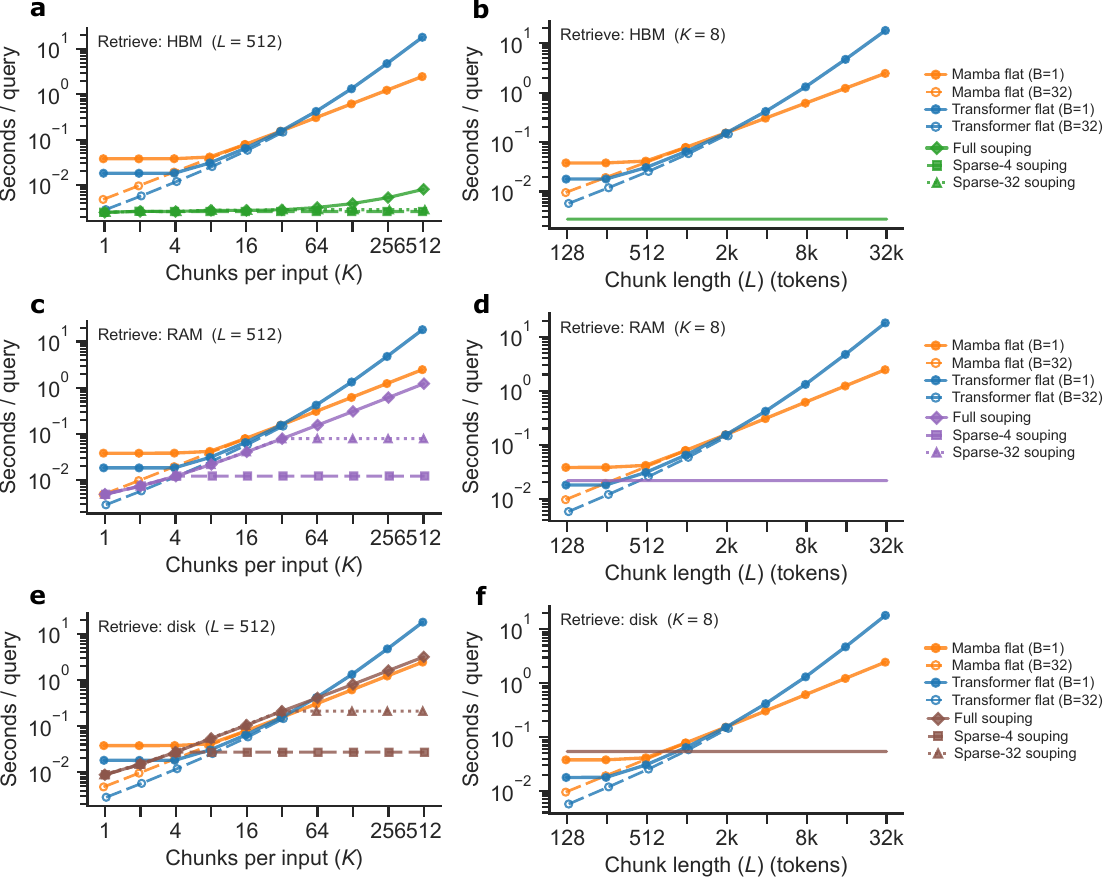}
    \caption{\textbf{Additional measurements on encoding time across models.} We plot additional sets of timings beyond those showin in Figure \ref{fig:compute}. In particular, we show full timings measured for full and sparse souping models retrieving from HBM (\textbf{a, b}), system RAM (\textbf{c, d}), and system disk (\textbf{e, f}). The timings for Transformer and sequential Mamba models in the main text are computed with batch size of 1. To give a sense for what the compute-bound timings look like, we plot the per-example timing for batch size 32 for Transformer and sequential Mamba models (dashed lines).}
    \label{fig:full_compute_time}
\end{figure}

\clearpage
\section{Constructions} \label{sec:constructions}

We use the simplified SSM model in Section~\ref{sec:theory} throughout. In the constructions below, each block matrix column has width $d$, in the order $[\vr e(z_{t-1});\vr u_{t-1}^{(\ell-1)};\vr e(z_t);\vr u_t^{(\ell-1)}]$. At a reset boundary, the previous residual vector is zero. Recurrent states are read after the current token's write, as specified in~\eqref{eq:linear-attention-state}.

\subsection{One-hop retrieval} \label{sec:proof_one_hop}
\begin{proof}
Let $E_K=[\vr e(k_1)\ \cdots\ \vr e(k_N)]$ and $E_V=[\vr e(v_1)\ \cdots\ \vr e(v_N)]$. Orthogonality gives $E_K^\top E_K=E_V^\top E_V=I_N$ and $E_K^\top E_V=0$. Define $A_K=E_KE_K^\top$ and $A_V=E_VE_V^\top$, and use depth $D=1$. The four block columns correspond to $[\vr e(z_{t-1});\vr u_{t-1}^{(0)};\vr e(z_t);\vr u_t^{(0)}]$. Choose
\begin{equation}
    \begin{split}
    W_k^{(1)}&=[A_K\ \ 0\ \ 0\ \ 0],\qquad W_v^{(1)}=[0\ \ 0\ \ A_V\ \ 0],\\
    W_q^{(1)}&=[0\ \ 0\ \ A_K\ \ 0],\qquad U=E_V^\top.
    \end{split}
    \label{eq:one-hop-weights}
\end{equation}
Together with the stated embeddings, convolution, residual map $W_o$, zero initial state, zero biases, and any fixed $\alpha\in(0,1]$, these choices specify all model parameters. We suppress the layer superscript below.

At a value token $v_{f(a)}$, the previous token is $k_a$, so the state increment is $\vr e(k_a)\vr e(v_{f(a)})^\top$. At a key token, including the query, $W_v\vr x_t=\vr0$, so there is no write. At the query the state is therefore
\[
    H_q=\sum_{a\in\operatorname{dom}(f)}\gamma_a\vr e(k_a)\vr e(v_{f(a)})^\top.
\]
For sequential encoding, $\gamma_a=\alpha^{T-t_a}$, where $t_a$ is the position of the value token and $T$ is the query position. For souping, if this pair lies in chunk $i(a)$ and its value has local position $t_{a,i(a)}$, then $\gamma_a=w_{i(a)}\alpha^{m_{i(a)}-t_{a,i(a)}+1}$. These coefficients include the decay at the query token. Orthogonality now gives $\vr h_q=\gamma_q\vr e(v_{f(q)})$. The residual update gives $\vr u_q^{(1)}=\vr h_q$ because $\vr u_q^{(0)}=0$, so $U\vr u_q^{(1)}$ has its only nonzero entry, $\gamma_q>0$, at coordinate $f(q)$. Argmax returns the correct value. In the sequential model, continuous convolution across pair boundaries gives the same writes, since keys have zero value projection.
\end{proof}

\subsection{Multi-hop retrieval with additive residuals} \label{sec:proof_multi_hop}
\begin{proof}

Use the embedding matrices and $A_K,A_V$ from the one-hop proof, and let $F=E_KE_V^\top$, so $F\vr e(v_a)=\vr e(k_a)$. The four block columns below correspond to $[\vr e(z_{t-1});\vr u_{t-1}^{(\ell-1)};\vr e(z_t);\vr u_t^{(\ell-1)}]$. In the first layer, use
\[
\begin{split}
    W_k^{(1)}&=[A_K\ \ 0\ \ 0\ \ 0],\qquad
    W_v^{(1)}=[0\ \ 0\ \ A_V\ \ 0],\\
    W_q^{(1)}&=[0\ \ 0\ \ A_K\ \ 0].
\end{split}
\]
Since $\vr u_t^{(0)}=\vr0$, this is the one-hop construction. Every context output is zero: a context key has not yet been written, and a value has zero query projection. At the final query, $\vr u_q^{(1)}=\gamma_q\vr e(v_{f(q)})$.

Subsequent layers must add the next retrieval and subtract the previous working vector. This can be implemented by the attention block itself. Choose a fixed unit vector $\vr b=\vr e(v_1)$, orthogonal to all key embeddings, and let $\vr c=\sum_{a=1}^N\vr e(k_a)$ and $B=\vr b\vr c^\top$. Thus $B\vr e(k_a)=\vr b$ and $B\vr e(v_a)=\vr0$. For $\ell\geq2$, set
\[
\begin{split}
    W_k^{(\ell)}&=[A_K\ \ 0\ \ B\ \ 0],\qquad
    W_v^{(\ell)}=[0\ \ 0\ \ A_V\ \ {-I_d}],\\
    W_q^{(\ell)}&=[0\ \ 0\ \ B\ \ F].
\end{split}
\]
At context positions the working vectors remain zero: a value writes the original association and has zero query projection; a key has zero value projection and queries $\vr b$, orthogonal to all association keys. Consequently, every layer's context state contains precisely the original associations, for both sequential encoding and independently encoded chunks.

At the final query, write $\vr u=\vr u_q^{(\ell-1)}$ and let
\[
    M=\sum_{a\in\operatorname{dom}(f)}\gamma_a\vr e(k_a)\vr e(v_{f(a)})^\top
\]
be the context contribution after multiplying the initialized context state by $\alpha$ when processing the query token, with the same coefficients as in the one-hop proof. The query's key, value, and query projections are respectively $\vr b$, $-\vr u$, and $F\vr u+\vr b$. Since the recurrence reads after the current write,
\[
    \vr h_q^{(\ell)}=(M-\vr b\vr u^\top)^\top(F\vr u+\vr b)
      =M^\top F\vr u-\vr u.
\]
Here $M^\top\vr b=\vr0$, $\vr b^\top F\vr u=0$, and $\|\vr b\|=1$. The additive residual therefore gives $\vr u_q^{(\ell)}=M^\top F\vr u_q^{(\ell-1)}$. Inductively,
\[
    \vr u_q^{(\ell)}=
      \left(\prod_{j=0}^{\ell-1}\gamma_{f^j(q)}\right)\vr e(v_{f^\ell(q)}).
\]
Use $U=E_V^\top$ and the same $\alpha$ at every layer. Every factor is positive, so the final readout identifies $v_{f^n(q)}$.
\end{proof}

\subsection{Ordered-chain construction} \label{sec:ordered_construction}

We give the explicit construction for Remark~\ref{remark:capacity}. Let $n=2^L$, and let the $C$ chains in~\eqref{eq:ordered-chains} have mutually disjoint node sets, so $C(n+1)\leq N$. Set $\alpha=1$ and use the orthogonal embeddings, residual path, and convolution from Section~\ref{sec:theory}. Convolution continues across pair boundaries in the sequential model. All recurrent states start at zero and all biases are zero. Write $A_K=E_KE_K^\top$, $A_V=E_VE_V^\top$, and $F=E_KE_V^\top$, as in the retrieval proofs.

The first $L$ layers store reverse associations. Using the same block columns as in the proof of Proposition~\ref{prop:multi-hop}, set
\[
\begin{split}
    W_k^{(1)}&=[0\ \ 0\ \ F\ \ 0],\qquad
    W_v^{(1)}=[A_K\ \ 0\ \ 0\ \ 0],\\
    W_q^{(1)}&=[0\ \ 0\ \ A_K\ \ 0].
\end{split}
\]
For $2\leq\ell\leq L$, let $G=F^\top$ and set
\begin{equation}
\begin{split}
    W_k^{(\ell)}&=[0\ \ 0\ \ F+G\ \ 0],\qquad
    W_v^{(\ell)}=[0\ \ I_d\ \ 0\ \ {-I_d}],\\
    W_q^{(\ell)}&=[0\ \ 0\ \ G\ \ I_d].
\end{split}
\label{eq:reverse-weights}
\end{equation}
Working vectors at value tokens remain zero, whereas those at key tokens lie in the key-embedding span. In the first layer, each value token writes its reverse one-hop association using the preceding key's original embedding. For $\ell\geq2$, a value token $v_{a_{c,j}}$ writes the reverse association with key $\vr e(k_{a_{c,j}})$ and value equal to the preceding key's working vector. A key token $k_a$ instead writes $-\vr e(v_a)(\vr u_t^{(\ell-1)})^\top$ and queries $\vr u_t^{(\ell-1)}+\vr e(v_a)$. The value-embedding direction $\vr e(v_a)$ is orthogonal to all reverse-association keys and to the cancellation keys at earlier context positions, since each context key is unique. Its current write therefore subtracts exactly $\vr u_t^{(\ell-1)}$ from the attention output. After residual addition, the working vector contains only the newly retrieved reverse jump. These fixed key and value spans suffice at every layer.

Index edges by $j=1,\ldots,n$, so edge $j$ is $(k_{a_{c,j-1}},v_{a_{c,j}})$. Let $\vr u_{c,j}^{(\ell)}$ denote the working vector at the key token of edge $j$ in chain $c$, for $\ell\geq1$. We claim
\begin{equation}
    \vr u_{c,j}^{(\ell)}=
    \begin{cases}
      \vr e(k_{a_{c,j-2^\ell}}),& j\geq2^\ell,\\
      \vr0,& j<2^\ell.
    \end{cases}
    \label{eq:ordered-invariant}
\end{equation}
For the base case $\ell=1$, the key token of edge $j$ is $k_{a_{c,j-1}}$. For $j\geq2$, the preceding edge has already written the reverse association from $k_{a_{c,j-1}}$ to $k_{a_{c,j-2}}$; for $j=1$ no such association exists. This proves the stated base case. For $\ell\geq2$, suppose the claim holds for layer $\ell-1$. Put $s=2^{\ell-1}$. At the value token of edge $j$, layer $\ell$ writes
\[
    \begin{cases}
      \vr e(k_{a_{c,j}})\vr e(k_{a_{c,j-s}})^\top,&j\geq s,\\
      0,&j<s.
    \end{cases}
\]
The component of the state addressed by key embeddings therefore stores reverse jumps of length $s$. At the key token of edge $j$, the corresponding query component is $\vr e(k_{a_{c,j-s}})$ when $j\geq s$. If $j\geq2s$, edge $j-s$ has already been processed, so its reverse association returns $\vr e(k_{a_{c,j-2s}})$. For $s\leq j<2s$, no matching association has been written; for $j<s$, this query component is zero. Both cases give zero retrieval. In layers $\ell\geq2$, the additional query component cancels the incoming working vector as described above. Orthogonality and disjoint chain nodes exclude all other matches, proving~\eqref{eq:ordered-invariant} after residual addition.

For $L\geq1$, the final layer writes forward endpoint associations. Set
\[
\begin{split}
    W_k^{(L+1)}&=[0\ \ A_K\ \ 0\ \ 0],\qquad
    W_v^{(L+1)}=[0\ \ 0\ \ A_V\ \ 0],\\
    W_q^{(L+1)}&=[0\ \ 0\ \ A_K\ \ 0],\qquad U=E_V^\top.
\end{split}
\]
By~\eqref{eq:ordered-invariant}, the preceding key's working vector is nonzero only for $j\geq2^L=n$. Thus only the final edge of each chain writes to this layer, producing
\[
    H_{\mathrm{end}}=\sum_{c=1}^C\vr e(k_{a_{c,0}})\vr e(v_{a_{c,n}})^\top.
\]
At a chain-start query, every reverse layer returns zero: no reverse association points back from a chain start, and the earlier occurrence of this start wrote no cancellation term. Hence the query enters the final layer with zero working vector, writes zero, and uses its original key embedding to retrieve the appropriate endpoint. The readout $U$ identifies that value. For example, four-hop retrieval uses two reverse layers, storing jumps of lengths one and two, followed by a final layer associating each chain start with its four-hop endpoint.

\clearpage
\section{Implementation details} \label{sec:impl_details}

\subsection{Tasks} \label{sec:tasks_appendix}

Across the document-QA and tool-use benchmarks we probe long-context robustness with a shared \emph{pooling} construction. Each example carries one or more \emph{gold} items --- the documents or tool definitions required to answer the query --- which we always retain; we then add sampled \emph{distractors} until the pool holds $k$ items in total and shuffle so the gold position is randomized. Thus $k$ counts the \emph{total} number of in-context chunks (not the number of distractors), and $k=1$ denotes a gold-only pool. Distractors are drawn without replacement from a split-local pool that excludes the example's own gold items, so training and evaluation distractors never cross split boundaries. During finetuning the pool size is sampled per example as $k \sim \mathrm{Uniform}[1, k_{\max}]$; at evaluation we sweep $k$ well beyond $k_{\max}$ to test generalization. All tasks share the Nemotron-Nano tokenizer \citep{nvidia2025nemotronnano2} (vocabulary size $131{,}072$) and are evaluated on the public validation split (the official test splits are hidden). RULER is handled differently, as described below.

\paragraph{Berkeley Function Calling Leaderboard (BFCL).} A standard tool-use benchmark \citep{patil2025bfcl}: the model is prompted with a list of tool definitions and must emit a tool call that solves the user query. Before evaluation we finetune each model for tool calling on a filtered subset of NVIDIA's Nemotron agentic SFT data \citep{nvidia2025nemotronnano2}, keeping the ${\approx}228{,}700$ examples whose first assistant turn is a tool call. We evaluate on BFCL, reporting the \emph{simple python} (one function, one call; $400$ examples), \emph{parallel} (one function, multiple calls; $200$ examples), and \emph{parallel multiple} (multiple functions, multiple calls; $200$ examples) splits. Tool pools are built with a global BM25 retriever \citep{robertson2009bm25} over the ${\approx}14{,}700$-tool non-live vocabulary: candidate tools are scored by BM25, truncated to the top $k$, and the gold tool(s) are then force-included and randomly positioned. Training samples $k \sim \mathrm{Uniform}[1,24]$; the headline evaluation sweeps $k \in \{1,12,24,36,48,64,96,128\}$. Correctness is scored by BFCL's abstract-syntax-tree checker, which validates the function name and every argument value against the reference-answer set. We additionally report an order-independent function-name-only match.

\paragraph{Stanford Question Answering Dataset (SQuAD).} A single-document extractive QA dataset \citep{rajpurkar2016squad}: given a Wikipedia paragraph, the model quotes a text span answering the query. We use SQuAD~v1.1 and draw distractor paragraphs from the same split. Training uses $k \sim \mathrm{Uniform}[1,24]$; evaluation sweeps $k \in \{1,12,24,36,48,64,96,128\}$ over a fixed subset of $500$ validation questions (shared across all arms via a common seed). We report exact match (EM) accuracy under SQuAD normalization (lowercasing, and removal of punctuation, articles, and extra whitespace). Dotted lines in the figures report closed-book accuracy ($k=0$, no documents in context) as a memorization floor.

\paragraph{HotpotQA.} A multi-hop QA dataset \citep{yang2018hotpotqa}. Each context paragraph becomes one document, and the annotated supporting paragraphs are marked gold (a multi-gold example). Pooling, evaluation $k$-sweep, example count, and metrics match SQuAD.

\paragraph{MuSiQue.} A multi-hop QA dataset engineered to require genuine reasoning over many hops and to resist single-document shortcuts \citep{trivedi2022musique}; it is a harder version of HotpotQA. We use the answerable variant (\texttt{musique\_ans}), treating the annotated supporting paragraphs as gold and scoring against the gold answer and its aliases. Pooling, evaluation $k$-sweep, example count, and metrics match SQuAD.

\paragraph{SQuAD-$n$.} To further stress multi-document attention, we construct \emph{SQuAD-$n$}: we merge $n$ SQuAD examples with distinct gold paragraphs into a single example whose prompt contains all $n$ gold paragraphs (mixed among $\max(k-n,0)$ distractors) followed by the $n$ questions, labeled $(1)\ldots(n)$; the model must return the $n$ answer spans in order. We use $n \in \{3,5\}$. We report micro-averaged EM over all (example, sub-question) pairs, and a strict \emph{joint} EM that credits an example only when every sub-answer is an exact match.

\paragraph{RULER.} A synthetic long-context benchmark \citep{hsieh2024ruler}. We use an in-repository reimplementation of its tasks and focus on three subtasks: single needle-in-a-haystack (a magic-number ``needle'' line hidden among Paul Graham essays), multi-key NIAH (the needle is hidden among confusable distractor needles), and variable tracking (chains of variable assignments embedded in filler text, where the model resolves the value of a queried variable by following a four-hop chain). Unlike the QA and tool-use tasks, RULER has no natural chunks and no distractor pool; following \citet{jafari2025surprising_soup} we chunk each input into consecutive $256$-token blocks, so the ``pool size'' is $\lceil \text{context length}/256 \rceil$ and grows with the context. A block is treated as gold if its token span overlaps a needle sentence (NIAH) or an assignment in the queried chain (variable tracking). We cap the training context at $2048$ tokens ($8$ chunks) and evaluate at context lengths $\{1,2,4,8,16,32,64,128\}\text{K}$ tokens over $200$ examples per (subtask, length).

\subsection{Models} \label{sec:models}

\paragraph{Architectures.} Our SSM is a $1.5$B-parameter Mamba-2 model \citep{dao2024mamba2}: $32$ Mamba-2 mixer layers (with no interleaved attention or MLP blocks), hidden size $2048$, RMSNorm, and no positional embedding. Each mixer uses a state-expansion factor of $2$ (inner dimension $4096$) partitioned into $64$ heads of head dimension $64$, state dimension $128$, $8$ groups, and a convolution kernel of width $4$. The recurrent state is therefore $4096 \times 128 = 524{,}288$ elements per layer, or ${\approx}16.8$M elements ($33.6$\,MB in bf16) across all $32$ layers --- this is the size of a single cached state encoding. Our Transformer is a $1.5$B-parameter LLaMA-style model: $20$ layers, hidden size $2048$, SwiGLU feed-forward width $5504$, $16$ attention heads with full multi-head attention, rotary position embeddings, and RMSNorm. The two models are matched in parameter count (${\approx}1.5$B). Both are pretrained on the Nemotron-CC corpus \citep{su2025nemotroncc} (on the order of $300$B tokens) with the same optimizer schedule (AdamW, peak learning rate $3\times10^{-4}$, cosine decay), share the Nemotron-Nano tokenizer, and run in bf16. We deliberately give the Transformer rotary embeddings so that it can process contexts longer than those seen in training, so that the sequential-Transformer baseline is given a fair chance at length generalization.

\paragraph{Weighted-souping read-out.} We augment the SSM with a lightweight read-out head that produces the key and query vectors of Section~\ref{sec:experiments}. Concretely, we read the last-token \emph{residual} activation at the final decoder block and pass it through two learned linear projections $W_k, W_q \in \R^{d \times d'}$ with $d=2048$ and $d'=128$ (equivalently, a single projection to $\R^{2d'}$ split into a key half and a query half). The two projections are separate parameters, initialized to identical values ($W_k = W_q$) so the head begins as the parameter-free cosine ranker over the base model's activations; they are untied thereafter and trained independently. The head is the only component initialized from scratch; it is trained jointly with the base model but placed in its own optimizer group at learning rate $\text{base LR} \times \texttt{lr\_scale}$, with $\texttt{lr\_scale}=10$.

\paragraph{Cosine normalization.} The weight assigned to chunk $i$ is $\sigma(\text{score}_i)$, where the score is a dot-product between the chunk key $\vr{k}_i$ and the query vector $\vr{u}$. Rather than the usual $1/\sqrt{d}$ dot-product scaling, we $\ell_2$-normalize the key and query and scale their cosine similarity by a learnable inverse temperature $\gamma$: $\text{score}_i = \gamma \, \frac{\vr{k}_i \cdot \vr{u}}{\|\vr{k}_i\|\,\|\vr{u}\|}$. We initialize $\gamma=14$ and cap it at $100$. Because the pre-nonlinearity logits are bounded to $[-\gamma, \gamma]$ regardless of pool size, the weighting stays well-conditioned as $k$ grows, which we found important for generalizing to pool sizes far beyond the training distribution. Our headline weighting function $\sigma$ is the softmax.

\paragraph{Sparse souping.} Sparse souping keeps only the top-$N$ weights and zeroes the rest; with softmax we apply the mask before the nonlinearity so the surviving weights still sum to one. It is applied at inference time and reuses the softmax checkpoints unchanged. We use $N_{top}=4$ for the document-QA and BFCL figures and $N_{top}=20$ for RULER; the larger $N_{top}$ is needed for variable tracking, where gold information is spread across many chunks (we sweep $N_{top}$ in Figure~\ref{fig:ruler_results}).

\paragraph{Auxiliary routing loss.} We optionally supervise the weights toward the known gold chunks with an auxiliary loss, $\mathcal{L} = \mathcal{L}_{\text{SFT}} + \lambda\,\mathcal{L}_{\text{aux}}$, where $\mathcal{L}_{\text{aux}}$ is a cross-entropy over the $k$ chunks under softmax weighting, computed from the same scores used to combine the states. We set $\lambda=1$ for RULER and tool use, where the gold chunks are reliably identifiable, and $\lambda=0$ for document QA, where the head learns the weighting purely from the downstream SFT signal.

\paragraph{Souping finetuning.} Finetuning is necessary to learn the souping format \citep{jafari2025surprising_soup}. All souping models are finetuned from the pretrained base with the softmax weighting, residual read-out, and cosine normalization above. For document QA we train for $3$ epochs (SQuAD, HotpotQA) or $5$ epochs (MuSiQue) at base learning rate $5\times10^{-6}$ with no auxiliary loss, from the train splits included with the benchmarks. For RULER we train for $6000$ steps at base learning rate $10^{-5}$ with $\lambda=1$, chunked at $256$ tokens. For tool use we \emph{warm-start} from a tool-calling-SFT checkpoint already finetuned on the Nemotron Agentic SFT split \citep{nvidia2025nemotronnano2}, and train for an additional one epoch at base learning rate $10^{-5}$ with $\lambda=1$ to learn the souping format. In every case the read-out head uses the $10\times$ learning-rate multiplier described above. The sequential baselines are finetuned on the concatenated input for matched steps at learning rate $10^{-5}$ (QA/RULER) or $5\times10^{-5}$ (tool use), so that all models see matched data and compute.

\paragraph{Decoding and evaluation.} All models decode greedily. For tool use, decoding is constrained by a tokenizer-level logit mask that enforces the tool-call envelope (\texttt{<TOOLCALL>[\,\ldots\,]</TOOLCALL>}); the constraint is applied identically to the souping arms and the sequential baselines. The QA and RULER tasks decode without constraints. Evaluation pool sizes, context lengths, and example counts are as given in Appendix~\ref{sec:tasks_appendix}.

\subsection{Computational benchmark} \label{sec:compute_appendix}

The timings in Figure~\ref{fig:compute} are measured on a single AWS \texttt{p5.48xlarge} node ($8\times$ H100). We compare four inference paths: a sequential Transformer, a sequential Mamba, full weighted souping, and sparse souping. For the souping paths, we time the encoding of a $128$-token query and the computation of the softmax weights, while the chunk state encodings themselves are \emph{retrieved} rather than recomputed, from one of three memory tiers: GPU high-bandwidth memory (HBM), system RAM, or system disk. Each cached Mamba-2 encoding is the $33.6$\,MB bf16 state described above, so retrieval cost is dominated by the bandwidth of the tier in which the encodings live; retrieval from HBM is far faster than from RAM, which in turn is faster than from disk, and in the worst case (short chunks retrieved from disk) retrieval can be slower than re-encoding.

\end{document}